\documentclass[runningheads]{llncs}

\usepackage{eccv}

\usepackage{eccvabbrv}

\usepackage{graphicx}
\usepackage{booktabs}
\usepackage{dsfont}
\usepackage{multirow}
\usepackage{subcaption}

\usepackage[accsupp]{axessibility}  % Improves PDF readability for those with disabilities.

\usepackage{hyperref}

\usepackage{orcidlink}

\begin{document}

% ---------------------------------------------------------------
% TODO REVIEW: Replace with your title
\title{RandMark: On Random Watermarking of Visual Foundation Models} 

% TODO REVIEW: If the paper title is too long for the running head, you can set
% an abbreviated paper title here. If not, comment out.
% \titlerunning{Abbreviated paper title}

% TODO FINAL: Replace with your author list. 
% Include the authors' OCRID for the camera-ready version, if at all possible.
\author{Anna Chistyakova \inst{1} \and
Mikhail Pautov\inst{1,2}}

% TODO FINAL: Replace with an abbreviated list of authors.
\authorrunning{Anna Chistyakova and Mikhail Pautov}
% First names are abbreviated in the running head.
% If there are more than two authors, 'et al.' is used.

% TODO FINAL: Replace with your institution list.
\institute{Trusted AI Research Center, RAS\\ \and
AXXX\\
%\email{\{abc,lncs\}@uni-heidelberg.de}
}

\maketitle

\begin{abstract}
    Being trained on large and diverse datasets, visual foundation models (VFMs) can be fine-tuned to achieve remarkable performance and efficiency in various downstream computer vision tasks. The high computational cost of data collection and training makes these models valuable assets, which motivates some VFM owners to distribute them alongside a license to protect their intellectual property rights. In this paper, we propose an approach to ownership verification of visual foundation models that leverages a small encoder-decoder network to embed digital watermarks into an internal representation of a hold-out set of input images. The method is based on random watermark embedding, which makes the watermark statistics detectable in functional copies of the watermarked model. Both theoretically and experimentally, we demonstrate that the proposed method yields a low probability of false detection for non-watermarked models and a low probability of false misdetection for watermarked models.
 
  \keywords{Watermarking \and Visual Foundation Models \and Fingerprinting}
\end{abstract}
  %In recent years, a variety of ownership verification methods that help to control an illegal redistribution of neural networks has been proposed. 

%   the owners of some VFMs to distribute them alongside the license to protect their intellectual property rights. However, a dishonest user of the protected model's copy may illegally redistribute it, for example, to make a profit. As a consequence, the development of reliable ownership verification tools is of great importance today, since such methods can be used to differentiate between a redistributed copy of the protected model and an independent model. 

\section{Introduction}
\label{sec:intro}

Today, foundation models are deployed in different fields, for example, in natural language processing \cite{radford2019language,brown2020language}, computer vision \cite{ramesh2022hierarchical}, and biology \cite{ma2023towards}. Their impressive performance in a wide range of downstream tasks comes at a price of high cost of data collection, training, and maintenance. Consequently, the models become valuable assets of their owners: the user's access to foundation models is mainly organized via subscription to a service where the model is deployed or via purchasing the license to use a specific instance of the model. Unfortunately, some users may violate the terms of use (for example, by integrating their instances of the models into other services to make a profit). Hence, it is reasonable that the models’ owners are willing to defend their intellectual property from unauthorized usage by third parties. 

One of the prominent approaches to protecting the intellectual property rights (IPRs) of models is watermarking \cite{uchida2017embedding,guo2018watermarking,li2022defending}, the set of methods that embed specific information into a model by modifying its parameters. In watermarking, ownership verification is performed by checking for the presence of this information in a model. An alternative set of methods for IPR protection is based on fingerprinting, which typically does not alter the original model \cite{pautov2024probabilistically,quan2023fingerprinting,he2019sensitive}. Instead, these methods generate a unique identifier, or fingerprint, for the model; ownership verification is then conducted by comparing the fingerprint of the original model with that of the suspicious model.

This work introduces a method for watermarking visual foundation models (VFMs) by embedding digital watermarks into the hidden representations of a specific set of input images. Within the framework, we experimentally verify that embedding a watermark into the representation allows us to protect the ownership of VFMs fine-tuned for different practical tasks, such as image classification and segmentation. We demonstrate that our approach is able to distinguish between an independent model and functional copies of the watermarked model with high probability. 

Our contributions are summarized as follows:

\begin{itemize} 
    \item We propose RandMark, a novel methodology for watermarking visual foundation models. Unlike prior art focused on classifiers, our approach embeds binary signatures directly into the model's hidden representations via a set of trigger images, making it suitable for the diverse downstream use-cases of VFMs.
    % \item Experimental results indicate that the method can differentiate between a watermarked VFM and independently trained models, even after the original model has been fine-tuned for the downstream tasks such as classification and segmentation.
    \item We theoretically derive an upper bound on the probabilities of false positive detection of a non-watermarked model and misdetection of a functional copy of the watermarked model.% successful detection of functional copy of the watermarked model under mild assumptions on distributions of functional copies and independent models. 
    \item Through experiments on state-of-the-art visual foundation models (CLIP and DINOv2), we demonstrate that RandMark is highly robust. It successfully detects model ownership after various functional perturbations, including fine-tuning on downstream tasks (classification and segmentation) and unstructured pruning, where existing fingerprinting methods fail. % To our knowledge, this is the first work that addresses the problem of watermarking of visual foundation models.
\end{itemize}

\begin{figure}[t]
    \centering
    \includegraphics[width=\textwidth]{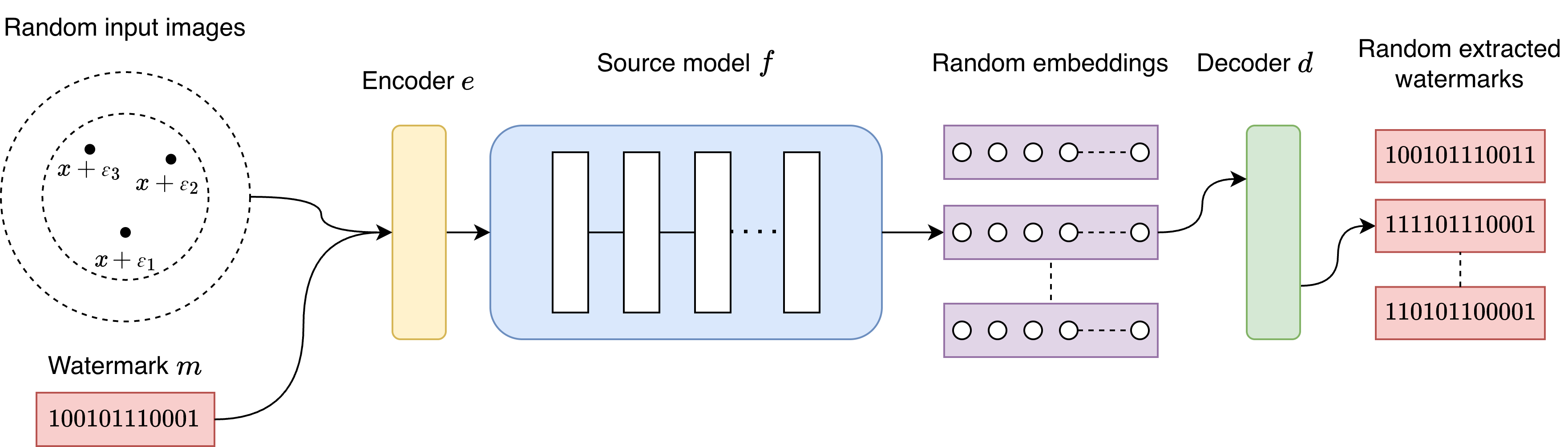}
    \caption{Overview of the proposed RandMark watermarking pipeline. A binary message is embedded into a visual foundation model using a set of trigger images and an encoder. During verification, randomized input transformations are applied to the trigger set, and a decoder extracts the watermark message from the model outputs. The extracted messages are then compared with the original watermark to verify model ownership.}
    \label{fig:rand_method}
\end{figure}

% \begin{figure}
%     \begin{center}
%         \includegraphics[width=0.75\textwidth]{images/express_print_num.pdf}
%         \caption{Schematic illustration of the proposed method. To embed the watermark, we pass an image $x$ to obtain its latent representation $p(x)$. Then, the first channel of $p(x)$, namely, $p_1(x)$ is concatenated with the watermark $m$ and passed to the encoder that produces the vector $e(\texttt{concat}(p_1(x),m))$. Later, the vector $e$ replaces the first channel of the original internal image representation, $p(x),$ and the updated vector $\tilde{p}(x)$ is passed to the latter part of VFM. To extract the watermark, we use a decoding network $d$ that maps an output of the VFM in the form $u = q(\tilde{p}(x))$ to the binary message $m',$ where $m'_i = \mathds{1}(d(u)_i \ge 1/2).$ Both encoder and decoder are represented by two fully connected layers.} 
%         \label{fig:method}
%     \end{center}
% \end{figure}

\section{Related work}

\subsection{Visual Foundation Models}

Visual foundation models, particularly those using vision transformers (ViT, \cite{DBLP:conf/iclr/DosovitskiyB0WZ21}), are widely used in modern computer vision due to their scalability and transferability across tasks. The advancement of self-supervised learning methods \cite{balestriero2023cookbook} has facilitated the creation of general-purpose models, including SimCLR \cite{chen2020simple}, DINO \cite{caron2021emerging}, CLIP \cite{radford2021learning}, and DINOv2 \cite{oquabdinov2}. These models learn representations from unlabeled images and demonstrate broad applicability across diverse tasks, often requiring minimal labeled data for fine-tuning. %Nevertheless, their internal mechanisms, specifically the characteristics of neural activations, remain an area of ongoing research.

% A recent concept in the analysis of these models is massive activations \cite{sun2024massive} -- unusually high response values in specific layers or tokens that contribute significantly to the model's output. These activations have been observed to occur across the latter layers of a model, are often of consistently high magnitudes, and can be located at the same spatial or token positions across different input images.

\subsection{Protecting Intellectual Property of Neural Networks}
% The use of watermarking and fingerprinting techniques to protect the intellectual property of neural networks has received increased attention within the field of trustworthy artificial intelligence. Research in this area includes various approaches: for instance, \cite{xu2024instructional} applies instruction tuning to fingerprint large language models, using a predefined private key to trigger a specific text output. In \cite{song2024manifpt}, the definitions of artifact and fingerprint in large generative models are formalized based on the geometric properties of the training data manifold. Another approach, proposed in \cite{pautov2024probabilistically}, involves using artificially generated images for the attribution of image classifiers under model extraction attacks. Furthermore, research in \cite{sander2024watermarking} indicates that it is possible to detect if a model was trained on the synthetic output of a watermarked large language model, highlighting a potential privacy consideration associated with neural network watermarking.

The protection of intellectual property for visual foundation models (VFMs) has gained increasing attention within the field of trustworthy AI. Watermarking and fingerprinting techniques aim to verify model ownership and prevent unauthorized usage or model extraction. While early works focused on large language models \cite{xu2024instructional, sander2024watermarking}, recent efforts adapt these ideas to visual models, including image classifiers and foundation models \cite{song2024manifpt, pautov2024probabilistically}. 

For visual foundation models (VFMs), there are currently no watermarking approaches specifically designed for these architectures. Several existing model ownership verification methods, such as ADV-TRA \cite{xu2024united}, and IPGuard \cite{cao2021ipguard}, have been proposed in the context of image classification. These approaches embed ownership signatures by either modifying the training objective or introducing crafted input patterns and then detect them based on the model’s responses. While these methods are effective for classification models, they are not directly tailored to the broader capabilities of VFMs, such as image feature extraction or downstream adaptation. Adapting watermarking techniques to visual foundation models thus remains an open challenge and motivates the work presented in this paper. Other complementary methods exploit weight-space smoothing or perturbations to embed ownership information directly into model parameters. For example, Bansal et al. \cite{bansal22a, ren23c} propose model watermarking through weight smoothing in deep neural networks, making the watermark robust to fine-tuning or minor architectural changes. These approaches provide alternative mechanisms to mark models without relying on specific input-output triggers and are particularly relevant for large visual foundation models where modifying the backbone is costly.  

Overall, while watermarking for VFMs is still in early stages, these methods illustrate that both data-driven triggers and weight-space techniques can serve as practical IP protection strategies for high-capacity visual models.

\section{Methodology}
\subsection{Problem Statement}

In this work, we focus on the problem of watermarking of visual foundation models. To describe the proposed method, we start by introducing the notations. Let $s$ be the dimension of the input image and $f:\mathbb{R}^s\to \mathbb{R}^k$ be the source VFM that maps input images to the embeddings of dimension $k$. Here and below, we will write $f' \sim f$ to indicate that the model $f'$ is a functional copy of $f$ that is obtained, for example, via fine-tuning, knowledge distillation or pruning of the original model. Analogously, by writing $g \perp f$ we will indicate that two models, $g$ and $f$, are independent of each other. In our method, we train two auxiliary models, the encoder $e:\mathbb{R}^s \times \{0,1\}^n \to \mathbb{R}^k$ that embeds the binary message $m$ of length $n$ into the representation of the input object $x \in \mathbb{R}^s$, and the decoder $d:\mathbb{R}^k\to \{0,1\}^n$ that extracts a binary message from the output embedding of the VFM. Given the input image $x$, the source model $f$ and the message $m$ embedded into $f(x)$, the goal of the method is two-fold: on the one hand, the decoder $d$ should extract close messages from the representations $f(x)$ and $f'(x)$ for the model $f' \sim f$; on the other hand, given the model $g \perp f$, the messages extracted from the representations $f(x)$ and $g(x)$ have to be far apart.

% $\Omega_f$ be the space of visual foundation models that are functionally dependent on $f$, and $\Xi$ be the space of functionally independent visual foundation models. Let $h$ be the dimension of the hidden representation of the image and $m\in \{0,1\}^n$ be the binary vector of length $n$.   In addition, let $f$ be the composition $f(x) \equiv q(p(x)),$ where $p:\mathbb{R}^s\to\mathbb{R}^h$ maps an image to the hidden representation, and $q:\mathbb{R}^h\to\mathbb{R}^d$ maps the vector from the  hidden representation to the output embedding of the VFM. 

%Additionally, we fine-tune the latter part of the source model, namely, $q$. 

The formal problem statement goes as follows. Given $x$ as the secret input image used for watermarking, a predefined threshold $\tau \ll n$ and probability thresholds $ 0 < \gamma_1 \ll \gamma_2 < 1$, the following inequalities should hold:
\begin{equation}
\label{eq:statement}
    \begin{cases}
      \displaystyle 
        \mathbb{P} \left(\|w - d(f'(x))\|_1 \le \tau\right) \ge \gamma_1, \\ 
        \\
        \mathbb{P} \left(\|w - d(g(x))\|_1 \le \tau\right) \le \gamma_2, \\
    \end{cases}
 \end{equation}
where $w = e(x,m)$ is the embedding with the watermark, $f'\sim f, \ g \perp f.$ In Eq.~\ref{eq:statement}, the probabilities are taken over the randomness induced by the encoder; this randomness will be discussed in the subsequent sections.

% In this work, we present a method to watermark visual foundation models by training an auxiliary network that embeds binary messages into hidden representations of input images of the source VFM. We start by introducing the notations used throughout the paper. Namely, let s be the dimension of an image, Ω be the space of visual foundation models, f:Rs →Rd be the source VFM that maps input images to embeddings of dimension d, h be the dimension of hidden image representation, and let m {0,1}k be the binary vector of length k. Let f be the composition f(x) ≡ q(p(x)), where p: Rs→ Rh and q:Rh → Rd represent mappings from an image to the hidden representation and from the hidden representation to the output embeddings, respectively. In our method, we train two auxiliary models, namely, encoder e:Rh{0,1}k → Rh that embeds the binary message m into the hidden representation p(x) and decoder d: Rd → {0,1}k that extracts binary messages from q(x). In addition, we fine-tune the latter part of the source model, namely, q. Given the image x, the message m embedded into p(x) and transform π:Ω→Ω that maps the foundation model to its functional copy,  the goal of the method is two-fold: on the one hand, the decoder d should extract close messages from hidden representations of f and π(f); on the other hand, given the model g which is functionally independent of f, the messages extracted from hidden representations of f and g should be far apart. 

\subsection{Threat Model}
In this section, we discuss the conditions under which the proposed method is expected to operate correctly and outline the potential adversary’s capabilities. The goal of an adversary is to remove an existing watermark from a model so that ownership cannot be verified. Specifically, an adversary may attempt either a watermark removal attack, aiming to eliminate the watermark while preserving the model’s functionality, or a model extraction attack, trying to obtain a copy of the watermarked model without the watermark. Possible attacks include fine-tuning the model on downstream tasks or pruning. The objective of the watermarking method is to reliably determine whether a suspect model is a functional copy of the watermarked visual foundation model.

\subsection{Proposed Method}
We introduce RandMark, a novel watermarking approach designed for visual foundation models. RandMark embeds user-specific binary signatures into the representations of a randomly transformed set of input images. To do so, we fine-tune the source model together with the lightweight encoder and decoder networks. This approach enables ownership verification by extracting digital fingerprints from the set of randomly transformed specific set of input images and computing the statistic of resulting random variables. 

The watermarking process goes as follows. First of all, given input image $x$ and user-specific binary message $m,$ we inject $m$ into the representation of $x + \varepsilon_j, \ \varepsilon_j \sim \mathcal{N}(0, \sigma^2(x) I)$ by training the small encoder $e$ and fine-tuning the source model $f$. Modified representation, $f(x+\varepsilon_j)$, is then passed to the decoder network $d$ that extracts binary message $m'_j$ from it. We highlight that the extracted messages, $m'_j$, are random variables due to the randomness in transformation of input image. 
The encoder, decoder, and the source foundation model are trained jointly to minimize both the average discrepancy between $m$ and $m'_j$ and the variance of $m'_j.$

% \subsection{ActiveMark in a Nutshell}
% We introduce ActiveMark, a novel watermarking approach designed specifically for visual foundation models. ActiveMark  \ embeds user-specific binary signatures into a preselected internal feature representation of a holdout set of input images. To do so, we fine-tune a small number of latter layers of the source VFM together with training the lightweight encoder and decoder networks. This approach enables ownership verification by extracting digital fingerprints directly from the model's activations when provided with specific input images. Unlike traditional watermarking techniques that modify model weights or outputs, our method introduces minimal architectural changes while preserving the functional capacity of the model.

% The watermarking procedure goes as follows. First of all, given input image $x$ and user-specific binary signature $m$, we train a small encoder network $e$ that injects $m$ into a selected channel of the internal activation of the preselected transformer block of the source model $f$. Then, the modified representation of $x$, namely, $e(p(x), m)$, is propagated through the latter part of $f$, namely, through $q$. At the last block, the decoder network $d$ extracts the binary message $m'$ from the output embedding of $e(p(x), m)$, namely, $m' = d(q(e(p(x), m))).$

% The encoder, decoder, and the latter part of the foundation model are trained jointly to minimize the discrepancy between $m'$ and $m$ while leaving the output embedding intact. 

\subsubsection{Loss function} The training objective is the combination of two terms: given the input sample $x$, the first one ensures that the feature representations of the watermarked and original models do not deviate much; the second term forces the extracted binary messages to be close to the embedded one. Specifically, the objective function is 
\begin{equation}
    L(x, f, \tilde{f}) = \|f(x) - \tilde{f}(x)\|_2 +\frac{\lambda}{K} \sum_{j=1}^K \|m - m_j'\|_2,
\end{equation}
where $\lambda >0$ is a scalar parameter, $\tilde{f}$ is the watermarked version of $f$ and $m_j' = d(\tilde{f}(e(x+\varepsilon_j, m)))$ is the binary message extracted by the decoder from $x + \varepsilon_j$ and $K$ is the total number of transformations of the input image. This formulation ensures the successful embedding and extraction of watermarks with little to no impact on the feature representation.

% \begin{figure}
%     \begin{center}
%         \includegraphics[width=\textwidth]{images/massive_activations_old.pdf}
%         \caption{The average magnitudes of activations of blocks of the source VFMs. It is noteworthy that starting from a particular block, the magnitudes of activations increase drastically, namely, starting from block $18$ of DINOv2 and from block $12$ of CLIP.} 
%         \label{fig:activations}
%     \end{center}
% \end{figure}

% This motivates our selection of these blocks as carriers for signature embedding. For each selected expressive block, we further localize the most influential tokens — the ones that are most critical for the block's output. For each token, we compute the absolute values of output activations and choose the token as an outlier if the corresponding output activation increases drastically after some block (namely, if on a particular layer its z-score increases up to threshold value $t$). We propose to use these outlier tokens as internal activation anchors for binary message injection.

\subsubsection{Evaluating the efficiency of the method}

To evaluate the performance of the proposed method, given the user-specific watermark $m$ and input image $x$, we compute both the sample average and the sample variance of the variable $\|m - m'\|_1$, where $m'$ is the  watermark. Here we recall that the extracted watermarks are random.
% Note that we specifically indicate that the extracted message depends both on the input image and the model from which it is extracted. 

Namely, if the total number of transformations of the input image is $K$ and the length of the watermark is $n$,
we measure the average number of matching bits between $m$ and $m'_j$ in the form 
\begin{equation}
\label{eq:mean}
    \hat{\mathbb{E}}\|m - m'\|_1 = \frac{1}{K} \sum_{j=1}^K \sum_{i=1}^n  \mathds{1}\left(m_i \ne ( m'_j)_i\right),
\end{equation}
and the sample variance is computed as
\begin{equation}
\label{eq:var}
    \hat{\mathbb{V}}\|m-m'\|_1 = \frac{1}{K-1}\sum_{j=1}^K \left(d_j - \hat{\mathbb{E}}\|m-m'\|_1\right)^2,
\end{equation}
where $d_j = \sum_{i=1}^n \mathds{1}\left(m_i \ne (m'_j)_i\right),$ and $m_j' = d(\tilde{f}(e(x+\varepsilon_j, m))).$ The intuition behind using this two metrics is as follows. First of all, given the extracted message $m',$ the distance from Eq. \eqref{eq:mean} is expected to be small for the watermarked model and large for an independent model. Secondly, if we introduce an auxiliary variable in the form 
\begin{align}
    v(f, h) = \mathbb{V}\left(\|m'(f) - m'(h)\|_1\right), 
\end{align}
then $v(f,f')$ is expected to be small for $f' \sim f$ and  $v(f,g)$ is expected to be large for $g \perp f$. We elaborate on this point in the subsequent sections.

% Recall that a good watermarking method has to satisfy two conditions: on the one hand, given  input image $x$, for the watermarked model $f$, the distance has to be close to $0$; on the other hand, for a separate (independent) model, the distance has to be close to $n$. 

In this work, the decision rule that is used to evaluate whether the given network is watermarked is the comparison of the distance with a predefined threshold: given the suspicious model $h$, input image $x$, secret message $m$ and the series of $K$ watermarks $m'_1, m'_2,\dots,m'_K$ extracted from $h$, we treat $h$ as watermarked if  
\begin{equation}
    \rho(x) = \hat{\mathbb{E}}\|m-m'\|_1 = \frac{1}{K}\sum_{j=1}^K \sum_{i=1}^n \mathds{1}\left(m_i \ne (m'_j)_i\right) \le \tau,
\end{equation}
where $\tau\ge0$ is the threshold value. In the case of many input images used for watermarking, namely, for $N$ images from $ \mathcal{X} = \{x_1,\dots,x_N\}$, the performance of the method is illustrated by the watermark detection rate, $R(h,\mathcal{X},\tau)$, in the form below:
\begin{equation}
\label{eq:R}
  R(h, \mathcal{X}, \tau) = \frac{1}{N}\sum_{x_i \in \mathcal{X}} \mathds{1}[\rho(x_i) \le \tau].
\end{equation}
As an auxiliary indicator of the model being watermarked, for each $x\in \mathcal{X},$ we compute the value of statistic $v(f,h).$ 
\subsubsection{Setting the threshold value}

We set the threshold by formulating a hypothesis test: the null hypothesis, $H_0 = $  ``the model $h$ is not watermarked'', is tested against an alternative hypothesis, $H_1 = $  ``the model $h$ is watermarked'', for the given suspicious model $h$. In this section, %we assume that the messages $m'(g,x)$ extracted from all the non-watermarked models $g$ are distributed uniformly over all bit strings of length $n$. Additionally, 
we assume that the probabilities that the $i'$th bit in $m'(f)$ and $m'(h)$ coincide are the same for all $i \in [1,n].$   Having said so, we estimate the probability of false acceptance of hypothesis $H_1$ (namely, $FPR_1$) as follows:
\begin{equation}
    FPR_1 = \mathbb{P}_{g \sim \Xi} [\rho(m, m'(g,x))  < \tau] = \sum_{j=0}^\tau \binom{n}{j} (1-r)^j r^{n-j}, 
\end{equation}
where $r = \mathbb{P}_{g \sim \Xi}(m_i = m'(g,x)_i).$
To choose a proper threshold value for $\tau$, we set up an upper bound for $FPR_1$ as $\varepsilon$ and solve for $\tau$, namely, 
\begin{equation}
    \tau = \arg\max_{\tau' < n}  \left(\sum_{j=0}^{\tau'} \binom{n}{j} (1-r)^j r^{n-j}\right) \quad \text{s.t.} \quad \sum_{j=0}^{\tau'} \binom{n}{j} (1-r)^j r^{n-j} < \varepsilon.
\end{equation}

\subsection{Difference between watermarked and non-watermarked models}
\label{sec:probs}
 
Recall that a good watermarking approach should yield a high watermark detection rate from \eqref{eq:R} for the models that are functionally connected to the watermarked one and, at the same time, low detection rates for independent models. To assess the integrity of the proposed approach, we estimate the probabilities of the method to yield low detection rates for functionally dependent models and high detection rates for independent models in the form 
\begin{equation}
\label{eq:detection_probs}
  \mathbb{P}_{f' \sim \Omega_f}[R(f', \mathcal{X}, \tau) < \overline{R}], \quad \mathbb{P}_{g \sim \Xi}[R(g, \mathcal{X}, \tau) > \underline{R}]
\end{equation}
for some threshold values $0 < \underline{R} < \overline{R} < N.$ 

To estimate the probabilities from \eqref{eq:detection_probs}, we firstly provide one-sided interval estimations for conditional probabilities of bit collisions in the form
\begin{equation}
\label{eq:bits_probabilities}
    r(\Omega_f|x) = \mathbb{P}_{f' \sim \Omega_f} [m_i = m'(f',x)_i], \quad r(\Xi|x) = \mathbb{P}_{g \sim \Xi} [m_i = m'(g,x)_i].
\end{equation}
We do it by sampling $M$ functionally dependent models, namely, $f_1', \dots, f'_M \sim \Omega_f,$ and $M$ independent models, namely $g_1, \dots, g_M \sim \Xi.$ Here, the space $\Xi$ of independent models consists of visual foundation models, both of the same architecture and of different architectures as $f$, by either fine-tuning of \emph{non-watermarked} copy of $f$ for a downstream task, of via functionality stealing perturbations, for example, via knowledge distillation \cite{hinton2014distilling} or pruning \cite{han2016deep}. Similarly, the space $\Omega_f$ consists of the models, both of the same architecture and of different architectures as $f$, by either fine-tuning of $f$ for a downstream task, of via functionality stealing perturbations. 

Then, given the set $\mathcal{X} = \{x_1, \dots, x_N\}$ of images used for the watermarking of $f$ from \eqref{eq:R}, we compute the quantities
 \begin{equation}
     \mathds{1}(f'_j,i, x_l) = \mathds{1}[m_i = m'(f'_j, x_l)_i] \quad \text{and} \quad \mathds{1}(g_j,i, x_l) = \mathds{1}[m_i = m'(g_j, x_l)_i]
 \end{equation}
%  as the samples to estimate the probabilities from \eqref{eq:bits_probabilities} and compute one-sided Clopper-Pearson confidence intervals for $r(\Omega_f|x)$ and $r(\Xi|x)$ in the form 
 \begin{equation}
 \label{eq:bit_estimates}
  \begin{cases}
    \mathbb{P}(r(\Omega_f|x) < l(x)) \le \frac{\alpha}{N},\\
    \mathbb{P}(r(\Xi|x) > u(x)) \le \frac{\alpha}{N}.
\end{cases} 
\end{equation}
These estimates, namely, $l(x)$ and $u(x)$, are used to estimate the probabilities from \eqref{eq:detection_probs}.
 
%  \begin{equation}
%      \mathbb{E}_{f' \sim \Omega_f} (\mathds{1}(f', i,x_l)) = r(\Omega_f|x_l), \quad  \mathbb{E}_{g\sim \Xi}(\mathds{1}(g, i,x_l)) = r(\Xi|x_l).
%  \end{equation}
% Now we can bound the probabilities from \eqref{eq:statement} using the statistics from \eqref{eq:bits_probabilities}:
% \begin{align}
% \label{eq:upper_bound}
%     & \mathbb{P}_{f' \sim \Omega_f} (\rho(m, m'(f',x))  < \tau ) > \sum_{j=0}^\tau \binom{n}{j} (1 - r(\Omega_f))^j r(\Omega_f)^{n-j} \equiv \gamma_1, \nonumber \\ 
%      &\mathbb{P}_{g \sim \Xi} (\rho(m, m'(g,x))  < \tau ) < \sum_{j=0}^\tau \binom{n}{j} (1 - r(\Xi))^j r(\Xi)^{n-j} \equiv \gamma_2.
% \end{align}
% In the next section, we describe an approach to estimate $r(\Omega_f)$ and $r(\Xi).$

\subsubsection{Estimating the probability of a deviation of the  detection rate}

%Without loss of generality, we discuss the estimation of the probability $\mathbb{P}_{f' \sim \Omega_f}[R(f', \mathcal{X}, \tau) < \overline{R}]$ computed over functionally dependent models. 
In this section, we discuss how to upper-bound both the probability of false detection of a non-watermarked model as a copy of the watermarked one and the probability of misdetecting a functional copy of the watermarked model.  

Note that $R(f', \mathcal{X}, \tau)$ is a sum of $N$ independent Bernoulli variables with parameters, $r(\Omega_f|x)$, so 
\begin{equation}
    \mathbb{P}_{f' \sim \Omega_f}[R(f', \mathcal{X}, \tau) < \overline{R}] = \sum_{l=0}^{\overline{R}-1}\sum_{S \subset \mathcal{X}: |S|=l} \prod_{x_{in}\in S} r(\Omega_f|x_{in}) \prod_{x_{out} \notin S} (1 - r(\Omega_f|x_{out})).
\end{equation}
Note that replacing the parameters $r(\Omega_f|x)$ with its estimations in the form $l(x)$ from \eqref{eq:bit_estimates} yields the bound
\begin{equation}
\label{eq:bound_for_dep}
    \mathbb{P}_{f' \sim \Omega_f}[R(f', \mathcal{X}, \tau) < \overline{R}] < \sum_{l=0}^{\overline{R}-1}\sum_{S \subset \mathcal{X}: |S|=l} \prod_{x_{in}\in S} l(x_{in}) \prod_{x_{out} \notin S} (1 - l(x_{out})) = \underline{p}(\Omega),
\end{equation}
%To prove the bound from \eqref{eq:bound_for_dep},  

%The bound from \eqref{eq:bound_for_dep}  
that holds with probability at least $1-\alpha.$  Similarly, 
\begin{equation}
    \mathbb{P}_{g \sim \Xi}[R(g, \mathcal{X}, \tau) > \underline{R}] < \sum_{l=\underline{R}+1}^{N}\sum_{S \subset \mathcal{X}: |S|=l} \prod_{x_{in}\in S} u(x_{in}) \prod_{x_{out} \notin S} (1 - u(x_{out})) = \underline{p}(\Xi).
\end{equation}
\begin{remark}
During experimentally, we used $n=32, \tau=5, M=1000$ and varied confidence level $\alpha$ such that probabilities $\alpha, \underline{p}(\Xi), \underline{p}(\Omega)$ were close. Specifically, value $\alpha = 5\times 10^{-6}$ yields $\underline{p}(\Omega) = 10^{-6}, \underline{p}(\Xi)=10^{-4}$ and  $\overline{R} = 750, \underline{R}=600$. 
\end{remark}
Thus, if one uses the boundary values $\underline{R}, \overline{R}$ to distinguish between the watermarked and non-watermarked model, one is guaranteed to have both error probabilities $\underline{p}(\Omega), \underline{p}(\Xi)$ low.

\subsection{Alternative estimation of bit collisions}

According to equation \ref{eq:R}, the quantity $R(f, \mathcal{X}, \tau) = \sum_{i=1}^N \mathds{1}[\rho(m(x_i), m'(f, x_i)) \le \tau]$ is the sum of $N$ independent Bernoulli random variables. We may rewrite $R_1 = R(f', \mathcal{X}, \tau)$ and $R_2 = R(g, \mathcal{X}, \tau)$ from equation \ref{eq:detection_probs} in the form
\begin{align}
    & R_1 = \xi_1 + \xi_2 + \dots + \xi_{n-1} + \xi_n, \nonumber \\
    & R_2 = \eta_1 + \eta_2 + \dots + \eta_{n-1} + \eta_n, 
\end{align}
where $\xi_i \sim \emph{Bernoulli}(p_i), \eta_i \sim \emph{Bernoulli}(q_i)$ are independent and parameters $(p_i, q_i)$ are unknown. Let  $\overline{p} = \frac{1}{n}\sum_{i=1}^n p_i$ and $\overline{q} =  \frac{1}{n}\sum_{i=1}^n q_i$. Then, if $\overline{R} < n \overline{p}$ and $\underline{R} >n\overline{q}$ from equation \ref{eq:detection_probs}, the following lemma holds.

\begin{lemma}
Let $\delta >0$ and set $\varepsilon = \sqrt{\frac{1}{2n}\ln \left(\frac{1}{\delta}\right)}$. Let $\hat{p} = \frac{1}{n} R_1$ and $\hat{q} = \frac{1}{n} R_2$ be unbiased estimates of $\overline{p}$ and $\overline{q},$ respectively. Then, with  probability at least $1-\delta,$ the following upper bounds for probabilities from equation \ref{eq:detection_probs} hold: 
\begin{align}
\label{eq:new_lemma}
    & \mathbb{P}_{f' \sim \Omega_f}[R(f', \mathcal{X}, \tau) < \overline{R}] \le h(\hat{p}, \varepsilon^{-}), \nonumber \\ 
    &  \mathbb{P}_{g \sim \Xi}[R(g, \mathcal{X}, \tau) > \underline{R}] \le h(\hat{q},\varepsilon^{+}),
\end{align}
 \text{where}
\begin{align}
    &h(\hat{p}, \varepsilon^{-}) = \left(\frac{n(\hat{p}-\varepsilon)}{\overline{R}}\right)^{\overline{R}} \left( \frac{n(1 - (\hat{p} - \varepsilon))}{n - \overline{R}}\right)^{n-\overline{R}}, \nonumber \\
     & h(\hat{p}, \varepsilon^{+}) = \left(\frac{n(\hat{p}+\varepsilon)}{\underline{R}}\right)^{\underline{R}} \left( \frac{n(1 - (\hat{p} + \varepsilon))}{n - \underline{R}}\right)^{n-\underline{R}}.
\end{align}
\end{lemma}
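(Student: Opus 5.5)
The plan combines two standard ingredients: a Chernoff bound for the lower (respectively upper) tail of a Poisson-binomial sum, and a Hoeffding confidence bound that lets us replace the unknown mean parameter $\overline{p}$ (respectively $\overline{q}$) by its observable estimate shifted by $\varepsilon$.

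\emph{Chernoff step.} Write $R_1=\sum_i \xi_i$ with independent $\xi_i\sim\mathrm{Bernoulli}(p_i)$. For the lower tail, take $t>0$ and apply Markov's inequality to $e^{-tR_1}$:
\[
\mathbb{P}[R_1\le \overline{R}]\le e^{t\overline{R}}\prod_i (1-p_i+p_ie^{-t}).
\]
Since $\log$ is concave, the AM--GM inequality gives
\[
\prod_i(1-p_i+p_ie^{-t})\le (1-\overline{p}+\overline{p}e^{-t})^n .
\]
So the bound depends on the $p_i$ only through $\overline{p}$, which is exactly why the heterogeneity of the Bernoulli parameters causes no harm. Optimising over $t$ (valid because $\overline{R}<n\overline{p}$) gives the relative-entropy form
\[
\mathbb{P}[R_1<\overline{R}]\le\Big(\tfrac{n\overline{p}}{\overline{R}}\Big)^{\overline{R}}\Big(\tfrac{n(1-\overline{p})}{n-\overline{R}}\Big)^{n-\overline{R}}=:\phi(\overline{p}),
\]
that is, $\phi(\overline{p}) = e^{-n\,\mathrm{KL}(\overline{R}/n\,\|\,\overline{p})}$. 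The symmetric computation with $e^{tR_2}$, valid because $\underline{R}>n\overline{q}$, gives $\mathbb{P}[R_2>\underline{R}]\le\psi(\overline{q})$ with the same formula in $\underline{R}$.

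\emph{Monotonicity and Hoeffding step.}
\begin{itemize}
\item For $p>\overline{R}/n$, the function $\phi(p)=e^{-n\,\mathrm{KL}(\overline{R}/n\|p)}$ is decreasing in $p$. Hence any valid lower bound $p_-\le\overline{p}$ with $p_->\overline{R}/n$ yields $\phi(\overline{p})\le\phi(p_-)$.
\item Likewise, $\psi$ is increasing in $q$ for $q<\underline{R}/n$, so any valid upper bound $q_+\ge\overline{q}$ yields $\psi(\overline{q})\le\psi(q_+)$.
\end{itemize}
The one-sided Hoeffding inequality applied to $\hat{p}=R_1/n$, a mean of independent $[0,1]$-valued variables with expectation $\overline{p}$, gives $\mathbb{P}(\hat{p}-\overline{p}>\varepsilon)\le e^{-2n\varepsilon^2}=\delta$ for the stated choice of $\varepsilon$. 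So $\overline{p}\ge\hat{p}-\varepsilon$ with probability at least $1-\delta$. Substituting $p_-=\hat{p}-\varepsilon$ gives $h(\hat{p},\varepsilon^-)$. Similarly, $\overline{q}\le\hat{q}+\varepsilon$ with probability at least $1-\delta$, and substituting gives $h(\hat{q},\varepsilon^+)$; this matches the statement read with $\hat{q}$ in place of $\hat{p}$ in the second formula. If both bounds are required simultaneously, a union bound costs $2\delta$, or one uses $\ln(2/\delta)$ instead.

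\emph{Main obstacle.} The monotonicity substitution needs the surrogate parameter to stay on the correct side of the threshold, namely $\hat{p}-\varepsilon>\overline{R}/n$ and $\hat{q}+\varepsilon<\underline{R}/n$. Otherwise the Chernoff optimiser is not interior and the expression $h$ may exceed $1$ or stop being monotone. I would handle this by first checking the monotonicity claim through the derivative
\[
\partial_p\,\mathrm{KL}(a\|p)=\frac{p-a}{p(1-p)},
\]
which has a definite sign on each side of $p=a$. When the surrogate falls on the wrong side, I would note that the bound is trivially true as soon as $h\ge1$, and that on $[0,\overline{R}/n]$ the true tail probability is bounded by $1$ anyway. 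If trivial validity does not cover every case, I would add the assumption $\hat{p}-\varepsilon>\overline{R}/n$ (respectively $\hat{q}+\varepsilon<\underline{R}/n$) to the statement as an explicit hypothesis. A secondary subtlety is interpretive: the estimate $\hat{p}$ is a realization of $R_1/n$ from observed models, so Hoeffding is applied to that sample, while the probability being bounded is taken over a fresh copy.
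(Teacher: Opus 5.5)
Your proposal follows the paper's proof step for step: the Chernoff bound on the lower tail, and the Jensen/AM--GM step that reduces $\prod_i(1-p_i+p_ie^{t})$ to $(1-\overline{p}+\overline{p}e^{t})^n$. It then optimizes over $t$ under $\overline{R}<n\overline{p}$, uses monotonicity of $\gamma$ on $(\overline{R}/n,1]$ and one-sided Hoeffding, and lands on the same side condition $\hat{p}-\varepsilon>\overline{R}/n$ that the paper imposes. One correction: your ``trivially true when $h\ge 1$'' fallback is vacuous, because $h=e^{-n\,\mathrm{KL}(\overline{R}/n\,\|\,p)}\le 1$ on $[0,1]$ with equality only at $p=\overline{R}/n$, so that hypothesis is genuinely necessary; your union-bound remark for the two inequalities holding simultaneously is correct and is glossed over in the paper.
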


\begin{remark}
Some words about relation $\overline{R}$ and $\overline{p};$ proof will be moved to the appendix. 
\end{remark}

\begin{proof}
We provide a proof for the upper inequality from equation \ref{eq:new_lemma}. Specifically, we need to upper bound the probability $\mathbb{P}(R \le d),$ where $R \equiv R(f', \mathcal{X}, \tau)$ and $d\equiv \overline{R}.$ According to Chernoff bound,
\begin{equation}
    \mathbb{P}(R < d) \le \inf_{t<0} \exp(-td) \mathbb{E}(\exp(tR)).
\end{equation}
Note that, according to independence of $\xi_i,$ 
\begin{equation}
    \mathbb{E}(\exp(tR)) = \prod_{i=1}^n \mathbb{E}(\exp t \xi_i) = \prod_{i=1}^n (1-p_i + p_i e^t),  \ \text{and, hence,}
\end{equation}
\begin{equation}
     \mathbb{P}(R < d) \le \inf_{t<0} \left[\exp(-td)\prod_{i=1}^n (1-p_i +p_i e^t)\right].
\end{equation}

Let $\phi(p) = \ln (1-p + p e^t).$ Note that $\phi''(p) = -\frac{(e^t-1)^2}{(1-p+pe^t)^2}<0$ for all $t<0$, and, hence, $\phi(p)$ is strictly concave on $[0,1]$. 

From the concavity of $\phi(p),$ it follows that 
\begin{align}
    & \frac{1}{n}\sum_{i=1}^n \ln (1-p_i +p_ie^t) \le \ln (1-\overline{p} + \overline{p}e^t), \nonumber \\
    & \prod_{i=1}^n \left[1- p_i +p_i e^t\right] \le (1-\overline{p}+\overline{p}e^t)^n,
\end{align}
and, consequently,
\begin{equation}
    \mathbb{P}(R<d) \le \inf_{t<0} \left[\exp(-td) \left(1-\overline{p} + \overline{p}e^t \right)^n\right].
\end{equation}

Denote $\psi(t) = \exp(-td)(1-p+p e^t)^n.$ To find $\inf_{t<0} \psi(t),$ we analyze the derivatives of its logarithm:
\begin{equation}
    \frac{d}{dt} \ln \psi(t) = -d + \frac{n p e^t}{1-p+p e^t}
\end{equation}
Note that  $\frac{d}{dt} \ln \psi(t) = 0$ iff $-d + \frac{np y}{1-p+py} = 0,$ where $y =e^t < 1.$ 
\begin{equation}
   0 = -d + \frac{npy}{1-p+py}  \leftrightarrow y = \frac{d-pd}{np - pd}  \leftrightarrow t = \ln \left(\frac{d-pd}{np-pd}\right).
\end{equation}
To satisfy $y<1,$ it is required that $d < np.$
Since $\frac{d^2}{dt^2} \ln \psi(t) = \frac{np e^t (1-p)}{(1-p+pe^t)^2}>0$ is monotonic, $t = \ln \left(\frac{d-pd}{np-pd}\right)$ is a unique critical point of $\frac{d}{dt} \ln \psi(t).$
Thus, 
\begin{equation}
    \inf_{t<0} \psi(t) = \gamma(p) = \left(\frac{np}{d}\right)^d \left(\frac{n(1-p)}{n-d}\right)^{n-d}
\end{equation}
and the overall bound is 
\begin{equation}
    \mathbb{P}(R<d) \le \gamma(\overline{p})  
\end{equation}
Note that  $\gamma(\overline{p})$ is impossible to compute directly (since $\overline{p}$) is unknown. Instead, we note that $\gamma(p)$ is monotonic in $p.$
Indeed, 
\begin{align}
& \frac{d}{dp} \ln \gamma(p) = \frac{d}{p} - \frac{n-d}{1-p} \rightarrow    \frac{d}{dp} \ln \gamma(p) = 0 \ \text{at } p =\frac{d}{n}, \\
& \frac{d^2}{dp^2} \ln \gamma(p) = -\frac{d}{p^2} - \frac{n-d}{(1-p)^2} < 0,
\end{align}
and hence $\gamma(p)$ has a unique global maximum at $p=\frac{d}{n},$ is strictly increasing on $[0, \frac{d}{n})$ and is strictly decreasing on $(\frac{d}{n}, 1].$ 
Recall that $d<np,$ so $\gamma(p)$ is decreasing in $p.$ 

If we set $\hat{p} = \frac{1}{n}R_1,$ then, according to Hoeffding inequality, 
\begin{equation}
    \mathbb{P}(\overline{p} \le \hat{p} - \varepsilon) \le \exp(-2\varepsilon^2 n)
\end{equation}
and from monotonicity of $\gamma(p)$ for $\hat{p}-\varepsilon > \frac{d}{n},$ 
\begin{equation}
    \mathbb{P}(R<d) \le \gamma(\hat{p}-\varepsilon) 
\end{equation}
with probability at least $1-\delta$ for $\varepsilon = \sqrt{\frac{1}{2n}\ln\left(\frac{1}{\delta}\right)},$ what concludes the proof. The proof of the second inequality from equation \ref{eq:new_lemma} is similar. 
% \textcolor{blue}{We verify the necessary assumptions $d < np$ and $\hat{p} - \varepsilon > \frac{d}{n}$ experimentally. }
\end{proof}

\section{Experiments}

We conducted our experiments using two large-scale VFMs, CLIP \cite{radford2021learning} and DINOv2 \cite{oquabdinov2}. To train models on downstream tasks (namely, for classification and segmentation), we utilized three domain-specific datasets: 
\begin{itemize}
    \item E-commerce Product Images: This dataset consists of $18,175$ product images categorized into $9$ major classes based on Amazon’s product taxonomy. It is primarily used for image-based product categorization.
    \item FoodSeg103: A food image segmentation dataset containing $7,118$ images annotated with fine-grained pixel-wise labels for over $100$ food categories. It supports both semantic segmentation and instance-level analysis of food items.
\end{itemize}

\subsection{Watermark injection}
% Both source VFM were initialized with publicly available pretrained weights. To embed the watermarks, we use a random subset of $N=1000$ images from the ImageNet dataset \cite{deng2009imagenet}. For each image, we randomly and uniformly sample a binary vector $m$ of size $n=32$ and assign it as the corresponding watermark for this image. Later, we concatenate the first feature channel of $p(x)$ with the message $m$ to obtain the vector $ v = \texttt{concat}(p_1(x), m)$ which is passed to the encoder $e.$ The output of the encoder, namely, $e(v) \in \mathbb{R}^h$ then replaces the first feature channel of  $p(x)$. Next, an updated representation $\tilde{p}(x)$ that differs from the original $p(x)$ in the first channel is passed to the latter part of the VFM, namely, to $q.$ The output of the VFM in the form $ u = q(\tilde{p}(x))$ is passed to the lightweight decoder, $d$, that produces a vector $d(u)$ of size $n.$ An extracted binary message, $m',$ is binarization of $d(u)$ in the form $m'_i = \mathds{1}(d(u)_i \ge 1/2).$ The schematic illustration of the method is presented in Fig. \ref{fig:rand_method}.
Both source VFMs were initialized with publicly available pretrained weights. To embed watermarks, we use a random subset of $N=1000$ images from the ImageNet dataset \cite{deng2009imagenet}, assigning each image a randomly sampled binary vector $m$ of size $n=32$. The image and its corresponding message are jointly processed by an encoder, and the resulting output is forwarded through the VFM, allowing the watermark information to be incorporated while preserving model functionality. The embedded watermark can later be extracted using the RandMark\ procedure, producing a binary message $m'$ to verify model ownership. A schematic illustration of the method is presented in Fig.~\ref{fig:rand_method}, and detailed architectural descriptions are provided in the Supplementary Material.

\subsection{Functional perturbations of VFM}
To illustrate the robustness of watermarks embedded by RandMark, we evaluate how the detection rate from \eqref{eq:R} changes under fine-tuning of the model for downstream tasks and under pruning. Specifically, we fine-tune all the layers of the watermarked VFM for both classification and segmentation downstream tasks, using the aforementioned datasets. The fine-tuning was performed using the AdamW optimizer for $10$ epochs.

To investigate the impact of model sparsity on both classification accuracy and watermark robustness, we applied post-training unstructured $l_1$-norm pruning to the entire model. We evaluated two sparsity levels: moderate pruning, where 20\% of the lowest-magnitude weights were zeroed out, and aggressive pruning, where 40\% of the weights were removed. This procedure enabled us to assess the effect of varying sparsity levels on watermark reconstruction. Note that the  unrestricted $l_1$-norm pruning is used purely as the baseline to illustrate the robustness of the proposed method to the modifications of the model. 

\section{Results}
\begin{figure}[htb]
   \centering
        \subfloat[\centering The architecture of VFM is CLIP]{{\includegraphics[width=0.46\textwidth]{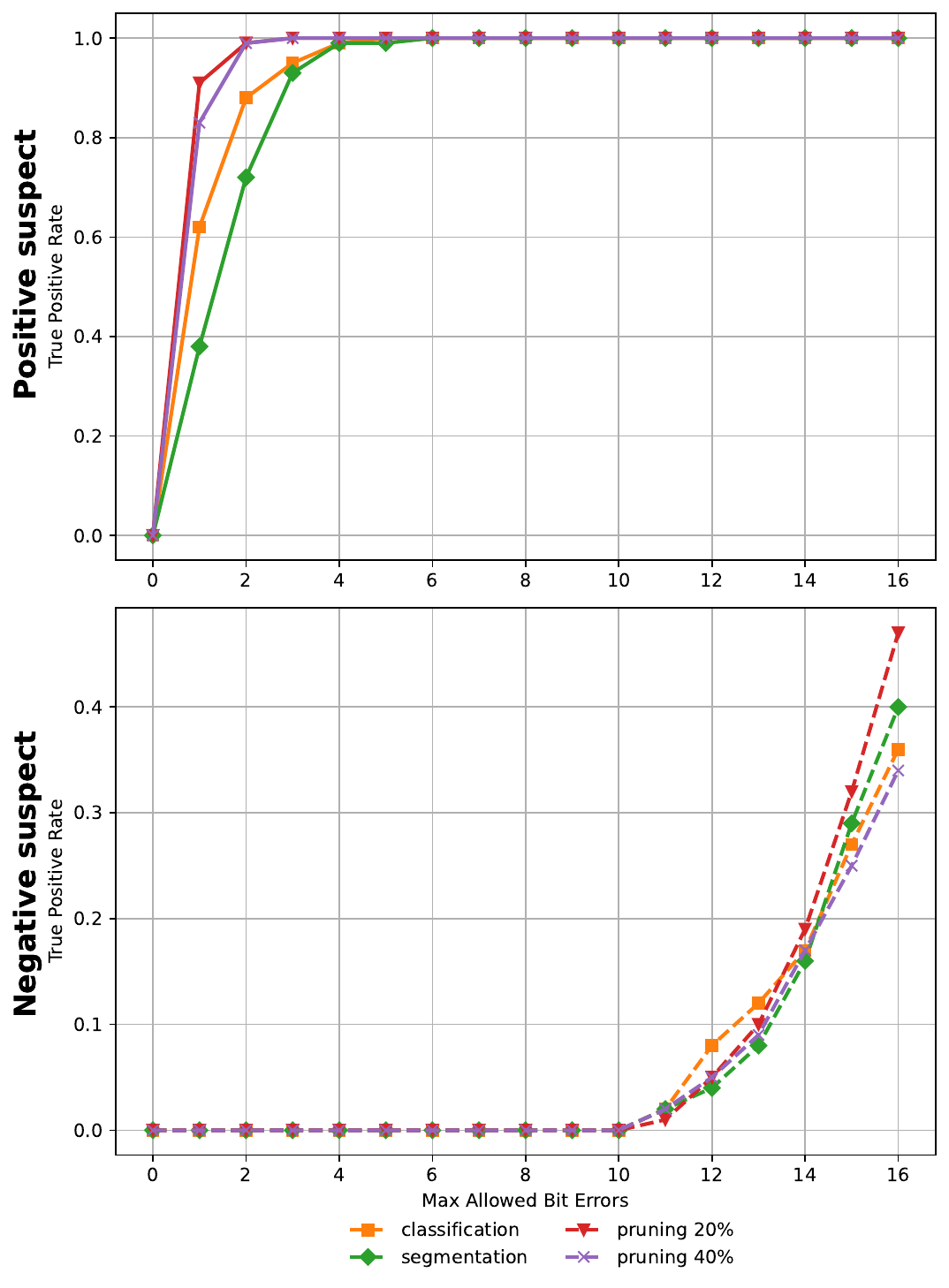} }}%
    \qquad
        \subfloat[\centering  The architecture of VFM is DINOv2]{{\includegraphics[width=0.46\textwidth]{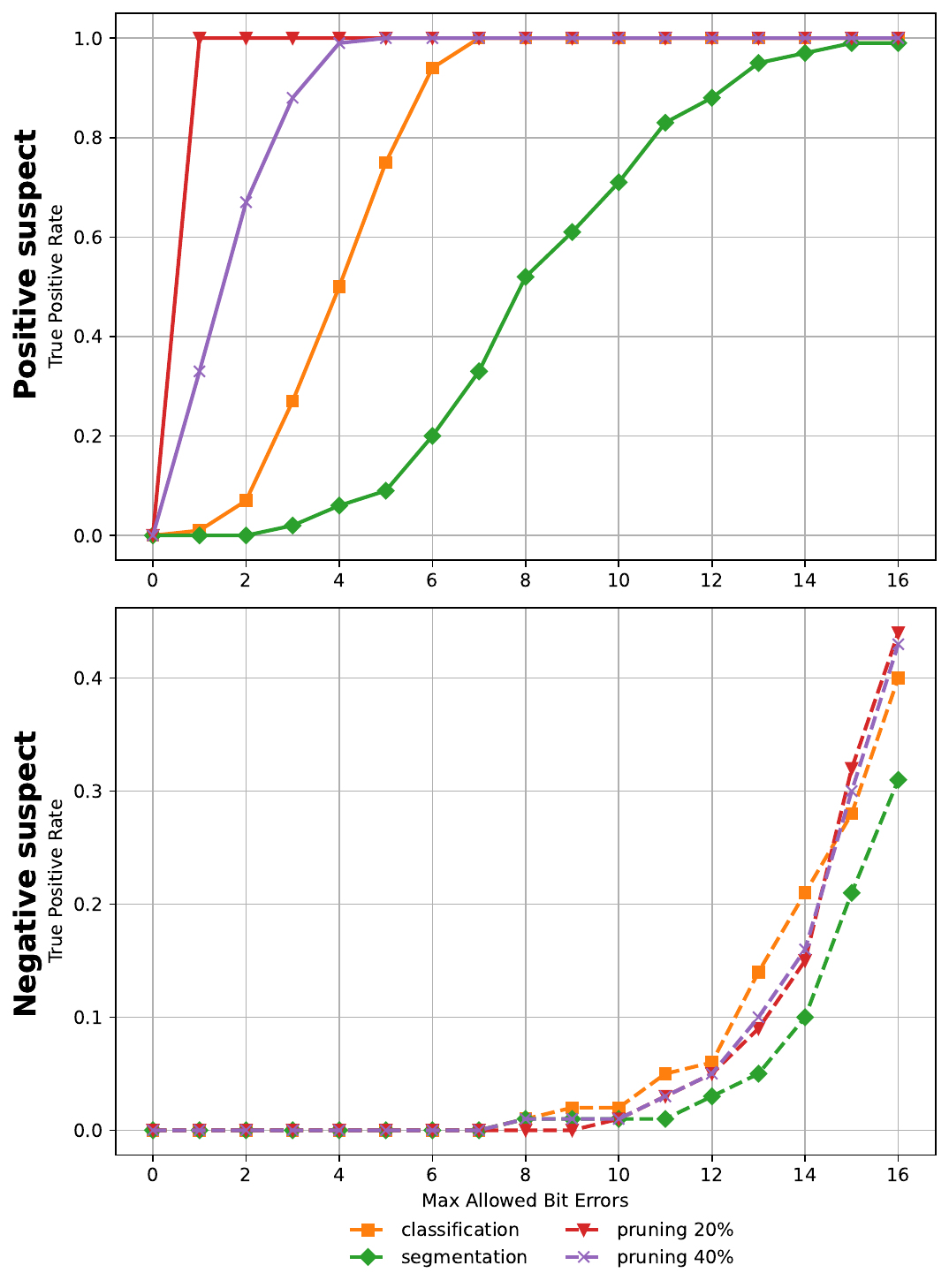} }}%
    \caption{Watermark detection rate $R$ from \eqref{eq:R}, averaged over $N=1000$ images used for watermarking. Classification experiments were conducted on the E-commerce Product Images dataset,  segmentation experiment was conducted on the FoodSeg103 dataset. }%
    \label{fig:results}%
\end{figure}

\begin{figure}[t]
\centering
\begin{subfigure}{0.8\linewidth}  % можно регулировать ширину
    \centering
    \includegraphics[width=\linewidth]{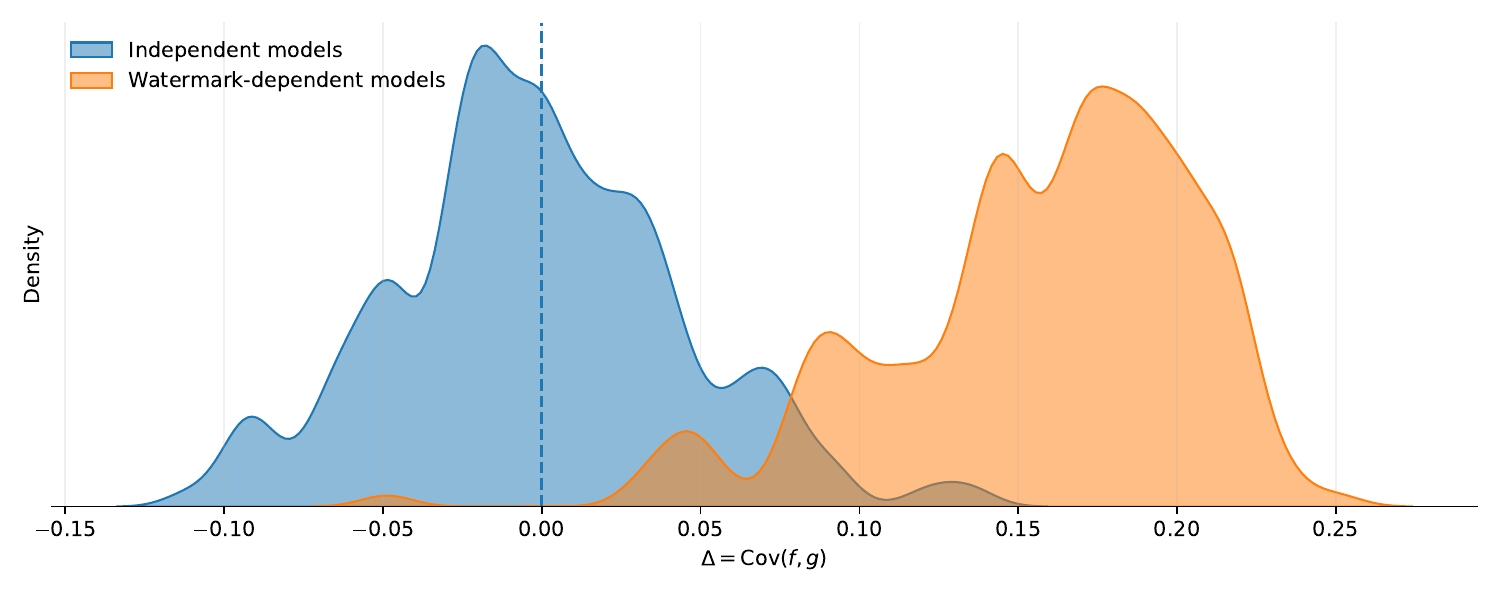}
    \caption{The architecture of VFM is DINOv2}
\end{subfigure}

\vspace{0.5cm}  % вертикальный отступ между картинками

\begin{subfigure}{0.8\linewidth}
    \centering
    \includegraphics[width=\linewidth]{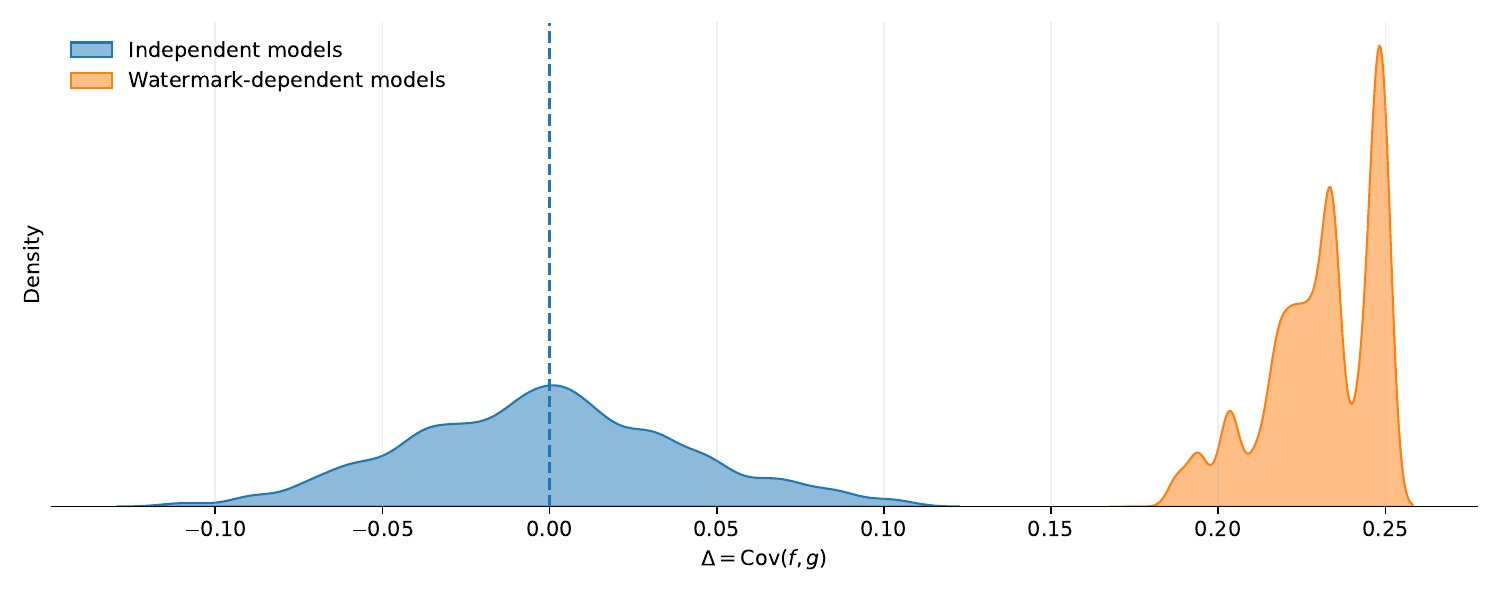}
    \caption{The architecture of VFM is CLIP}
\end{subfigure}

\caption{Distribution of covariance between decoded watermark messages from two models $f$ and $g$. Independent models produce covariance values near zero, while watermark-dependent models exhibit positive covariance due to correlated decoding of watermark bits.}
\label{fig:covariance_distribution}
\end{figure}

Experimentally, we assess the efficiency of RandMark \ by computing the average watermark detection rate from \eqref{eq:R} for different values of maximum number of bit errors, $\tau.$ In Fig. \ref{fig:results}, we demonstrate that our method can be used to reliably detect models that are functionally connected to the watermarked one, namely, the ones obtained via fine-tuning for downstream tasks and pruning. At the same time, RandMark \  does not falsely detect the presence of the watermarks in negative suspect models.

In addition to the detection rate, we analyze correlations between decoded watermark messages. Let $f$ and $g$ denote two models, and let $m'
(f)$ and $m'(g)$ be the watermark bit sequences decoded from them. The covariance between the decoded messages is estimated as
\[
\Delta = \frac{\mathbb{V}(m'(f)) + \mathbb{V}(m'
(g)) - \mathbb{V}(m'(f) - m'(g))}{2} = \text{cov}(f, g).
\]
In Fig.~\ref{fig:covariance_distribution} for independent models, $\Delta$ is close to zero, indicating negligible correlation. In contrast, watermark-dependent models exhibit positive covariance, reflecting correlated decoding of watermark bits. This complementary metric provides additional evidence of functional dependence between models and can help distinguish watermarked models from unrelated ones.

\subsection{Comparison with the baseline fingerprinting and watermarking approaches}

\begin{table}[t]
\centering
\caption{Comparison with baseline watermarking methods under segmentation fine-tuning. 
Segmentation performance and watermark extraction accuracy are reported after different numbers of fine-tuning epochs; the architecture of the source VFM is CLIP.}
\begin{tabular}{lcccccc}
\toprule
& \multicolumn{2}{c}{1 epoch} & \multicolumn{2}{c}{3 epochs} & \multicolumn{2}{c}{5 epochs} \\
\cmidrule(lr){2-3} \cmidrule(lr){4-5} \cmidrule(lr){6-7}
Method & Segm. $\uparrow$ & WM $\uparrow$ & Segm. $\uparrow$ & WM $\uparrow$ & Segm. $\uparrow$ & WM $\uparrow$ \\
\midrule
Randomized Smoothing & 0.14  & 0.27 & 0.36 & 0.00 & 0.46 & 0.00 \\
Ours & \textbf{0.32} & \textbf{1.00} & \textbf{0.52} & \textbf{0.99} & \textbf{0.55} & \textbf{0.97} \\
\bottomrule
\end{tabular}
\end{table}

We indicate the lack of fingerprinting methods designed specifically for visual foundation models. To compare our approach with some of the general-purpose fingerprinting approaches, we add a fine-tuned classification head to the source VFM. The classification head concatenates the CLS token with the mean of patch tokens, applies normalization and dropout, and feeds the result into a linear classifier. Here, the classification backbone  is  fine-tuned on the ImageNet dataset. Thus, we compare fingerprinting approaches in a classification scenario. Specifically, we compare RandMark \ with ADV-TRA \cite{xu2024united} and IPGuard \cite{cao2021ipguard} and report results in Table \ref{tab:comparison}, where we present average watermark detection rates  for both positive and negative suspect models. Specifically, negative suspect models here are the ones of different architecture (DINOv2 with registers and CLIP). There, experiments with the positive suspect models correspond to the ones reported in Fig. \ref{fig:results}. It is noteworthy that the proposed method outperforms general-purpose fingerprinting techniques in terms of watermark detection rate, both for positive and negative suspect models. 

To evaluate the robustness of the proposed method, we compare it with the baseline approach proposed in~\cite{bansal22a, ren23c}. In particular, we fine-tune the watermarked VFMs on an image segmentation task and monitor both the downstream task performance and the ability to recover the embedded watermark. This evaluation allows us to analyze whether the fingerprint remains detectable after task adaptation and whether the downstream performance is preserved.

The results are presented in Table~\ref{tab:comparison}. We observe that the baseline approach introduces a substantial degradation of the downstream task performance while also failing to preserve the watermark under fine-tuning. In contrast, our method maintains high watermark extraction accuracy while achieving significantly better segmentation performance, indicating that the proposed approach preserves both task adaptability and fingerprint detectability.

\begin{table}[htb]
    \centering
    \caption{Quantitative comparison with the general-purpose fingerprinting methods. We report the average watermark detection rate; the architecture of the source VFM is DINOv2.}
    \begin{tabular}{ccccc}
    \toprule
    Model type & Experiment &  RandMark & ADV-TRA & IPGuard \\
    \midrule
    \multirow{4}{*}{Positive suspect $\uparrow$} & 
     classification & $0.870$ & $0.021$ & $0.000$\\
    & segmentation & $0.750$ & $0.000$ & $0.000$\\
    & pruning (20\%) & $1.000$ & $1.000$ & $0.530$\\
     & pruning (40\%) & $1.000$ & $0.086$ & $0.000$\\
     \midrule
     \multirow{2}{*}{Negative suspect $\downarrow$} & DINOv2 w/ registers & $0.000$ & $0.012$ & $0.010$ \\
     & CLIP & $0.000$ & $0.000$ & $0.000$ \\
    \bottomrule
    \end{tabular}
    \label{tab:comparison}
\end{table}

% To assess the computational complexity of the watermarking methods, in Table \ref{tab:time} we present the time in minutes required to embed and extract the watermarks. All the experiments were conducted on a single GPU Nvidia Tesla A100 80GB. 

% \begin{table}[htb]
%     \centering
%     \caption{Comparison of computation complexity of the watermarking methods. For ActiveMark, we report cumulative results for  $N=1000$ images used for watermarking.}
%     \begin{tabular}{ccc}
%     \toprule
%     Method & Watermark embedding & Watermark extraction \\ 
%     \midrule
%     ActiveMark & $34.63 $ & $4.02$ \\
%     ADV-TRA & $1663.70$ & $3.41$\\ 
%     IPGuard & $1868.54$ & $11.62$\\ 
%     \bottomrule
%     \end{tabular}
%     \label{tab:time}
% \end{table}
To assess the computational complexity of the watermarking and fingerprinting methods, we refer the reader to the Supplementary Material.

%Specifically, we use a lightweight encoder-decoder architecture 

% \subsubsection*{Author Contributions}
% If you'd like to, you may include  a section for author contributions as is done
% in many journals. This is optional and at the discretion of the authors.

% \subsubsection*{Acknowledgments}
% Use unnumbered third level headings for the acknowledgments. All
% acknowledgments, including those to funding agencies, go at the end of the paper.

\section{Conclusion}
In this work, we propose RandMark, a novel watermarking approach for visual foundation models. This method is model agnostic:  it is worth mentioning that the model’s owner has to prepare a set of input images and perform the watermark embedding procedure only once for a given instance of the model; then, the watermarked model remains detectable by our method after fine-tuning to a particular downstream task (for example, image classification and segmentation). % our experiments show that the proposed method allows to reliably detect the functional copies of a particular foundation model obtained by the fine-tuning and pruning both for image classification and segmentation. 
On the other hand, we verify that RandMark \  does not detect benign, independent models as functional copies of the watermarked VFM, which makes the method applicable in practical scenarios. We theoretically show that our method, by design, yields low false positive and false negative detection rates.

\bibliographystyle{splncs04}
\bibliography{main}
\end{document}